\documentclass[11pt]{article}
\usepackage[margin=1in]{geometry}
\usepackage{amsmath}
\usepackage{amssymb}
\usepackage{mathtools}
\usepackage{amsthm}
\usepackage{dsfont}
\usepackage{color}
\usepackage{graphicx}
\usepackage[numbers]{natbib}
\usepackage{hyperref}

\newtheorem{theorem}{Theorem}

\newcommand{\E}{{\mathbb{E}}}
\newcommand{\R}{{\mathbb{R}}}
\newcommand{\cH}{{\mathcal{H}}}
\newcommand{\cHN}{{\mathcal{H}_N}}
\newcommand{\rkhs}{{\mathcal{H_\infty}}}

\newcommand\ind[1]{\mathds{1}_{\{#1\}}}

\newcommand\optnN{h_{n,N}}
\newcommand\optninf{h_{n,\infty}}

\title{Reconciling modern machine learning practice \\ and the bias-variance trade-off}

\usepackage[affil-sl]{authblk}
\author[a]{Mikhail Belkin}
\author[b]{Daniel Hsu}
\author[a]{Siyuan Ma}
\author[a]{Soumik Mandal}
\affil[a]{The Ohio State University, Columbus, OH}
\affil[b]{Columbia University, New York, NY}

\begin{document} 
{\def\thefootnote{}
\footnotetext{E-mail:
\texttt{mbelkin@cse.ohio-state.edu},
\texttt{djhsu@cs.columbia.edu},
\texttt{masi@cse.ohio-state.edu},
\texttt{mandal.32@osu.edu}}}

\maketitle

\begin{abstract}
Breakthroughs in machine learning are rapidly changing science and society, yet our fundamental understanding of this technology has lagged far behind.
Indeed, one of the central tenets of the field, the bias-variance trade-off, appears to be at odds with the observed behavior of methods used in the modern machine learning practice.
The bias-variance trade-off implies that a model should balance under-fitting and over-fitting: rich enough to express underlying structure in data, simple enough to avoid fitting spurious patterns.
However, in the modern practice, very rich models such as neural networks are trained to exactly fit (i.e., interpolate) the data.
Classically, such models would be considered over-fit, and yet they often obtain high accuracy on test data.
This apparent contradiction has raised questions about the mathematical foundations of machine learning and their relevance to practitioners.

In this paper, we reconcile the classical understanding and the modern practice within a unified performance curve.
This   ``double descent'' curve  subsumes the textbook U-shaped bias-variance trade-off curve by showing how increasing model capacity beyond the point of interpolation results in improved performance. We provide evidence for the existence and ubiquity of double descent for a wide spectrum of models and datasets, and we posit a mechanism for its emergence. This connection between the performance and the structure of machine learning models delineates the limits of classical analyses, and has implications for both the theory and practice of machine learning.
\end{abstract}

\newpage

\section{Introduction}\label{sec:intro}

Machine learning has become key to important applications in science, technology and commerce.
The focus of machine learning is on the problem of prediction:
given a sample of training examples $(x_1,y_1),\dotsc,(x_n,y_n)$ from $\R^d \times \R$, we learn a predictor $h_n \colon \R^d \to \R$ that is used to predict the label $y$ of a new point $x$, unseen in training.
 
The predictor $h_n$ is commonly chosen from some function class $\cH$, such as  neural networks with a certain architecture, using \emph{empirical risk minimization (ERM)} and its variants.
In ERM, the predictor is taken to be a function $h \in \cH$ that minimizes the \emph{empirical (or training) risk} $\frac1n \sum_{i=1}^n \ell(h(x_i), y_i)$, where $\ell$ is a loss function, such as the squared loss $\ell(y',y) = (y' - y)^2$ for regression or zero-one loss $\ell(y',y) = \ind{y' \neq y}$ for classification.

The goal of machine learning is to find $h_n$ that performs well on new data, unseen in training. 
To study performance on new data  (known as {\it generalization}) we typically assume the training examples are sampled randomly from a probability distribution $P$ over $\R^d \times \R$, and evaluate $h_n$ on a new test example $(x,y)$ drawn independently from $P$. The challenge stems from the mismatch between the goals of minimizing the empirical risk (the explicit goal of ERM algorithms, optimization) and minimizing the \emph{true (or test) risk} $\E_{(x,y)\sim P}[\ell(h(x),y)]$ (the goal of machine learning).

Conventional wisdom in machine learning suggests controlling the capacity of the function class $\cH$ based on the {\em bias-variance trade-off} by balancing \emph{under-fitting} and \emph{over-fitting} (cf.,~\cite{geman1992neural,friedman2001elements}): 
\begin{enumerate}
  \item
    If $\cH$ is too small, all predictors in $\cH$ may \emph{under-fit} the training data (i.e., have large empirical risk) and hence predict poorly on new data.

  \item
    If $\cH$ is too large, the empirical risk minimizer may \emph{over-fit} spurious patterns in the training data resulting in poor accuracy on new examples (small empirical risk but large true risk).
\end{enumerate}
The classical thinking is concerned with finding the ``sweet spot'' between under-fitting and over-fitting.
The control of the function class capacity may be explicit, via the choice of $\cH$ (e.g., picking the neural network architecture), or it may be implicit, using regularization (e.g., early stopping).
When a suitable balance is achieved, the performance of $h_n$ on the training data is said to \emph{generalize} to the population $P$.
This is summarized in the classical U-shaped risk curve, shown in Figure~\ref{fig:double-descent}(a) that has been widely used to guide  model selection and is even thought to describe aspects of human decision making~\cite{gigerenzer2009homo}.
The textbook corollary of this curve is that ``a model with zero training error is overfit to the training data and will typically generalize poorly''~\cite[page 221]{friedman2001elements}, a view still widely accepted.

Yet, practitioners routinely use modern machine learning methods, such as large neural networks and other non-linear predictors that have very low or  zero training risk. In spite of the high function class capacity and near-perfect fit to training data, these predictors often  give very accurate predictions on new data.
Indeed, this behavior has guided a best practice in deep learning for choosing neural network architectures, specifically that the network  should be  large enough to permit effortless zero loss training (called {\it interpolation}) of the training data~\cite{russ17}.
Moreover, in direct challenge to the bias-variance trade-off philosophy,
recent empirical evidence indicates that neural networks and kernel machines trained to interpolate the training data obtain near-optimal test results even when the training data are corrupted with high levels of noise~\cite{zhang2016understanding,belkin2018understand}.

\begin{figure}
  \centering
  \begin{tabular}{cc}
  \includegraphics[height=0.15\textheight]{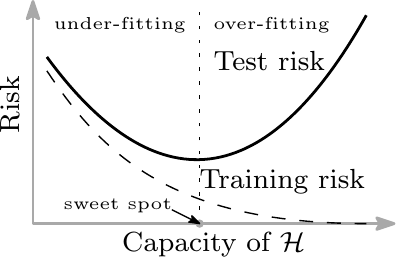} &
  \includegraphics[height=0.15\textheight]{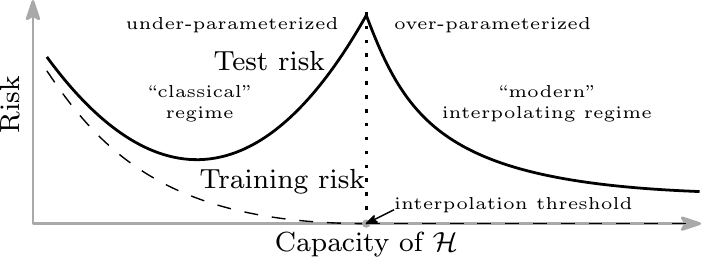} \\
  {\bf (a)} & {\bf (b)}
  \end{tabular}
  \caption{{\bf Curves for training risk (dashed line) and test risk (solid line).}
    ({\bf a}) The classical \emph{U-shaped risk curve} arising from the bias-variance trade-off.
    ({\bf b}) The \emph{double descent risk curve}, which incorporates the U-shaped risk curve (i.e., the ``classical'' regime) together with the observed behavior from using high capacity function classes (i.e., the ``modern'' interpolating regime), separated by the interpolation threshold.
    The predictors to the right of the interpolation threshold have zero training risk.}
  \label{fig:double-descent}
\end{figure}

The main finding of this work is a pattern for how performance on unseen data depends on model capacity and the mechanism underlying its emergence. This dependence, empirically witnessed with important model classes including neural networks and a range of datasets, is
summarized in the ``double descent'' risk curve shown in Figure~\ref{fig:double-descent}(b). The curve subsumes the  classical U-shaped risk curve from Figure~\ref{fig:double-descent}(a) by extending it beyond the point of interpolation.

When function class capacity is below the ``interpolation threshold'', learned predictors exhibit the classical U-shaped curve from Figure~\ref{fig:double-descent}(a).
(In this paper, function class capacity is identified with the number of parameters needed to specify a function within the class.)
The bottom of the U is achieved at the sweet spot which balances the fit to the training data and the susceptibility to over-fitting:
to the left of the sweet spot, predictors are under-fit, and immediately to the right, predictors are over-fit.
When we increase the function class capacity high enough (e.g., by increasing the number of features or the size of the neural network architecture), the learned predictors achieve (near) perfect fits to the training data---i.e., interpolation.
Although the learned predictors obtained at the interpolation threshold typically have high risk, we show that increasing the function class capacity beyond this point leads to decreasing risk, typically going below the risk achieved at the sweet spot in the ``classical'' regime.

All of the learned predictors  to the right of the interpolation threshold fit the training data perfectly and have zero empirical risk.
So why should some---in particular, those from richer functions classes---have lower test risk than others?
The answer is that the capacity of the function class does not necessarily reflect how well the predictor matches the \emph{inductive bias} appropriate for the problem at hand.
For the learning problems we consider (a range of real-world datasets as well as synthetic data), the inductive bias that seems appropriate  is the regularity or smoothness of a function as measured by a certain function space norm.
Choosing the smoothest function that perfectly fits observed data is a form of Occam's razor: the simplest explanation compatible with the observations should be preferred (cf.~\cite{Vapnik,blumer1987occam}).
By considering larger function classes, which contain more candidate predictors compatible with the data, we are able to find interpolating functions that have smaller norm and are thus ``simpler''. 
Thus increasing function class capacity improves performance of classifiers.

Related ideas have been considered in the context of margins theory~\cite{Vapnik,bartlett1998sample,schapire1998}, where a larger function class $\cH$ may permit the discovery of a classifier with a larger margin.
While the margins theory can be used to study classification, it does not apply to regression, and also does not predict the second descent beyond the interpolation threshold.
Recently, there has been an emerging recognition that certain interpolating predictors (not based on ERM) can indeed be provably statistically optimal or near-optimal~\cite{belkin2018overfitting,belkin2018does}, which is compatible with our empirical observations in the interpolating regime.

In the remainder of this article, we discuss empirical evidence for the double descent curve, the mechanism for its emergence and conclude with some final observations and parting thoughts.

\section{Neural networks}
\label{sec:RFF+NN}

In this section, we discuss the double descent risk curve in the context of neural networks.

\paragraph{Random Fourier features.}

We first consider a popular class of non-linear parametric models called \emph{Random Fourier Features} (\emph{RFF})~\cite{rahimi2008random}, which can be viewed as a  class of two-layer neural networks with fixed weights in the first layer.
The RFF model family $\cHN$ with $N$ (complex-valued) parameters consists of functions $h \colon \R^d \to \mathbb{C}$ of the form
\[
  h(x) = \sum_{k=1}^N a_k \phi(x; v_k)
  \quad\text{where}\quad
  \phi(x; v) := e^{\sqrt{-1} \langle v,x \rangle} ,
\]
and the vectors $v_1,\dotsc,v_N$ are sampled independently from the standard normal distribution in $\R^d$.
(We consider $\cHN$ as a class of real-valued functions with $2N$ real-valued parameters by taking real and imaginary parts separately.)
Note that $\cHN$ is a randomized function class, but as $N \to \infty$, the function class becomes a closer and closer approximation to the Reproducing Kernel Hilbert Space (RKHS) corresponding to the Gaussian kernel, denoted by $\rkhs$.
While it is possible to directly use $\rkhs$ (e.g., as is done with kernel machines~\cite{boser1992training}), the random classes $\cHN$ are computationally attractive to use when the sample size $n$ is large but the number of parameters $N$ is small compared to $n$.

Our learning procedure using $\cHN$ is as follows.
Given data $(x_1,y_1),\dotsc,(x_n,y_n)$ from $\R^d \times \R$, we find the predictor $\optnN \in \cHN$ via ERM with squared loss.
That is, we minimize the empirical risk objective $\tfrac1n \sum_{i=1}^n (h(x_i) - y_i)^2$ over all functions $h \in \cHN$.
When the minimizer is not unique (as is always the case when $N>n$), we choose the minimizer whose coefficients $(a_1,\dotsc,a_N)$ have the minimum $\ell_2$ norm.
This choice of norm is intended as an approximation to the RKHS norm $\|h\|_{\rkhs}$, which is generally difficult to compute for arbitrary functions in $\cHN$.
For problems with multiple outputs (e.g., multi-class classification), we use functions with vector-valued outputs and sum of the squared losses for each output.

\begin{figure}
  \centering
  \includegraphics[width=\textwidth]{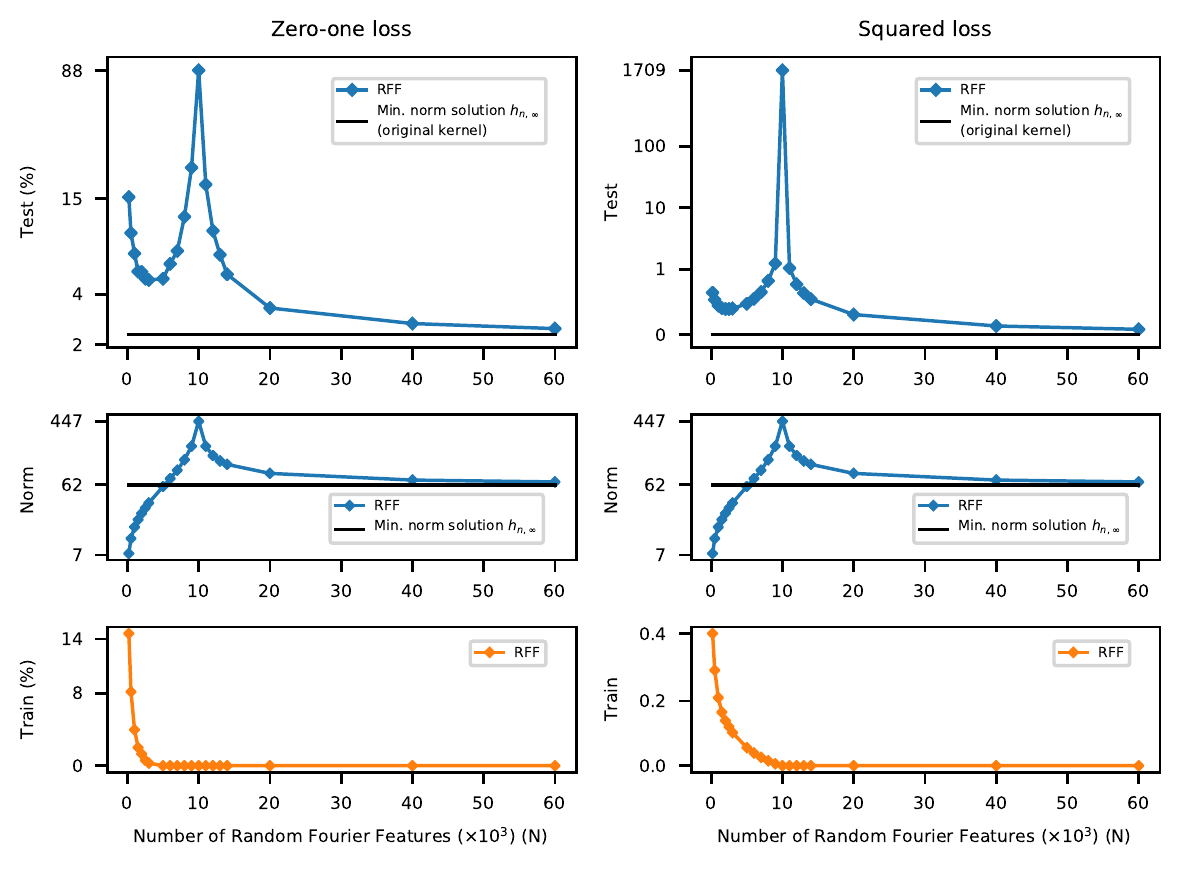}
  \caption{ {\bf Double descent risk curve for RFF model on MNIST.} Test risks (log scale), coefficient $\ell_2$ norms (log scale), and training risks of the RFF model predictors $\optnN$ learned on a subset of MNIST ($n=10^4$, $10$ classes). The interpolation threshold is achieved at $N=10^4$.}
  \label{fig:RFF_MNIST}
\end{figure}

In Figure~\ref{fig:RFF_MNIST}, we show the test risk of the predictors learned using $\cHN$ on a subset of the popular data set of handwritten digits called MNIST.
The same figure also shows the $\ell_2$ norm of the function coefficients, as well as the training risk.
We see that for small values of $N$, the test risk shows the classical U-shaped curve consistent with the bias-variance trade-off, with a peak occurring at the interpolation threshold $N=n$.
Some statistical analyses of RFF suggest choosing $N \propto \sqrt{n}\log n$ to obtain good test risk guarantees~\cite{rudi2017generalization}.

The interpolation regime connected with modern practice is shown to the right of the interpolation threshold, with $N \geq n$.
The model class that achieves interpolation with fewest parameters ($N=n$ random features) yields the least accurate predictor.
(In fact, it has no predictive ability for classification.)
But as the number of features increases beyond $n$, the accuracy improves dramatically, exceeding that of the predictor corresponding to the bottom of the U-shaped curve.
The plot also shows that the predictor $\optninf$ obtained from $\rkhs$ (the kernel machine) out-performs the predictors from $\cHN$ for any finite $N$.

What structural mechanisms account for the double descent shape?
When the number of features is much smaller then the sample size, $N \ll n$, classical statistical arguments imply that the training risk is close to the test risk.
Thus, for small $N$, adding more features yields improvements in both the training and test risks.
However, as the number of features approaches $n$ (the interpolation threshold), features not present or only weakly present in the data are forced to fit the training data nearly perfectly.
This results in classical over-fitting as predicted by the bias-variance trade-off and prominently manifested at the  peak of the curve, where the fit becomes exact.

To the right of the interpolation threshold, all function classes are rich enough to achieve zero training risk.
For the classes $\cHN$ that we consider, there is no guarantee that the most regular, smallest norm predictor consistent with training data (namely $\optninf$, which is in $\rkhs$) is contained in the class $\cHN$ for any finite $N$.
But increasing $N$ allows us to construct progressively better approximations to that smallest norm function.
Thus we expect to have learned predictors with largest norm at the interpolation threshold and for the norm of $\optnN$ to decrease monotonically as $N$ increases thus explaining the second descent segment of the curve.  
This is what we observe in Figure~\ref{fig:RFF_MNIST}, and indeed $\optninf$ has better accuracy than all $\optnN$ for any finite $N$. 
Favoring small norm interpolating predictors turns out to be a powerful inductive bias on MNIST and  other real and synthetic data sets~\cite{belkin2018understand}.
For noiseless data, we make this claim mathematically precise in Appendix~\ref{app:approx}.

Additional empirical evidence for the same double descent behavior using other data sets is presented in Appendix~\ref{app:RFF}.
For instance, we demonstrate double descent for rectified linear unit (ReLU) random feature models, a class of ReLU neural networks with a setting similar to that of RFF. The inductive bias corresponding to the larger number of features can be readily observed in a one-dimensional example in Figure~\ref{fig:rrf-plot}.
Although the fitted function is  non-smooth (piecewise linear) for any number of Random ReLU features,  it appears smoother---with smaller norm---as the number of features is increased.

Finally, in Appendix~\ref{app:RFF1D}, we also describe a simple synthetic model, which can be regarded as a one-dimensional version of the RFF model, where we observe the same double descent behavior.

\begin{figure}
  \centering
  \includegraphics[width=.70\textwidth]{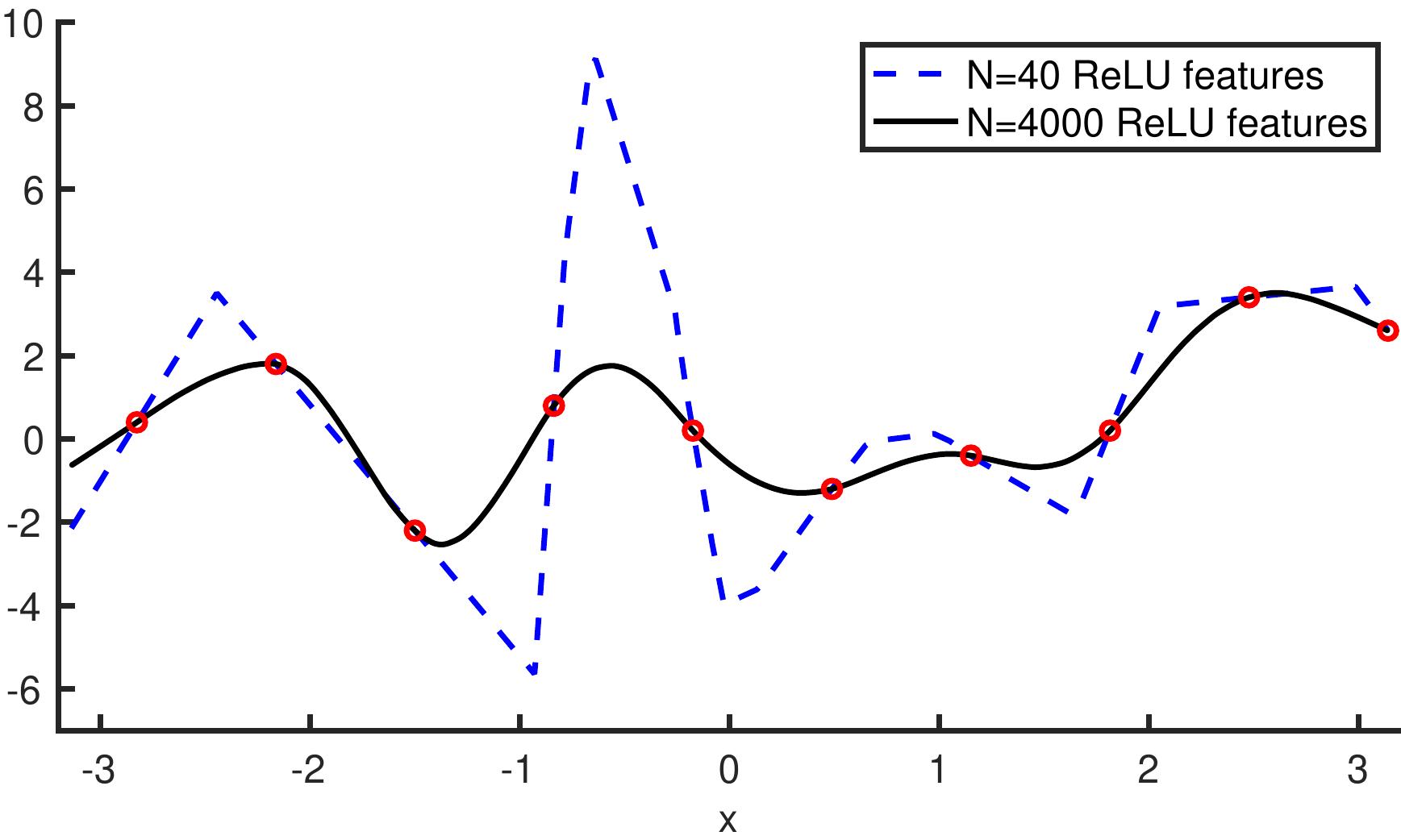}
  \caption{Plot of two univariate functions fitted to $10$ data points using Random ReLU features $\phi(x; (v_1,v_2)) := \max( v_1x + v_2, 0)$.
    The data points are shown in red circles.
    The fitted function with $N=40$ Random ReLU features is the blue dashed line; the coefficient vector's norm (scaled by $\sqrt{N}$) is $\approx 695$.
    The fitted function with $N=4000$ Random ReLU features is the black solid line; the coefficient vector's norm is $\approx 159$.}
  \label{fig:rrf-plot}
\end{figure}

\paragraph{Neural networks and backpropagation.}

In general multilayer neural networks (beyond RFF or ReLU random feature models), a learning algorithm will tune all of the weights to fit the training data, typically using versions of stochastic gradient descent (SGD), with backpropagation to compute partial derivatives.
This flexibility increases the representational power of neural networks, but also makes ERM generally more difficult to implement.
Nevertheless, as shown in Figure~\ref{fig:FuCoNN_MNIST_REUSEWEIGHT}, we observe that increasing the number of parameters in fully connected two-layer neural networks leads to a risk curve qualitatively similar to that observed with RFF models.
That the test risk improves beyond the interpolation threshold is compatible with the conjectured ``small norm'' inductive biases of the common training algorithms for neural networks~\cite{gunasekar2017implicit,pmlr-v75-li18a}.
We note that this transition from under- to over-parameterized regimes for neural networks was also previously observed by~\cite{bos1997dynamics,advani2017high,neal2018modern,spigler2018jamming}.
In particular, \cite{spigler2018jamming} draws a connection to the physical phenomenon of ``jamming'' in particle systems.

The computational complexity of ERM with neural networks makes the double descent risk curve difficult to observe.
Indeed, in the classical under-parametrized regime ($N \ll n$), the non-convexity of the ERM optimization problem causes the behavior of local search-based heuristics, like SGD, to be highly sensitive to their initialization.
Thus, if only suboptimal solutions are found for the ERM optimization problems, increasing the size of a neural network architecture may not always lead to a corresponding decrease in the training risk.
This suboptimal behavior can lead to high variability in both the training and test risks that masks the double descent curve.

\begin{figure}\centering
\includegraphics[width=0.7\textwidth]{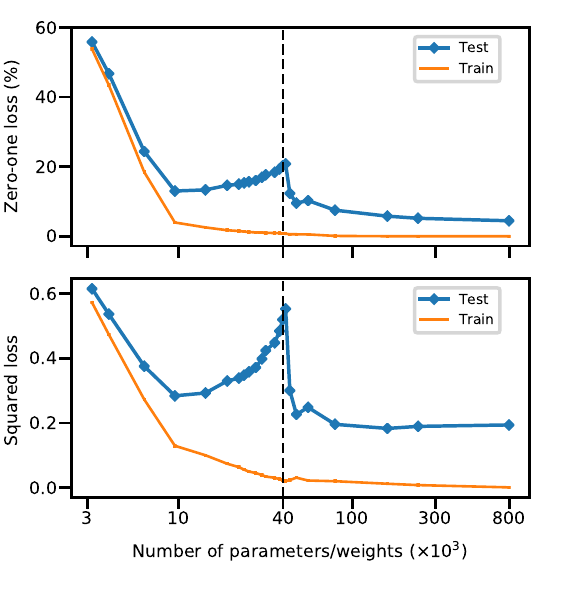}\caption{{\bf Double descent risk curve for fully connected neural network on MNIST.} 
  Training and test risks of network with a single layer of $H$ hidden units, learned on a subset of MNIST ($n=4\cdot10^3$, $d=784$, $K = 10$ classes).
  The number of parameters is $(d+1) \cdot H + (H+1) \cdot K$. 
  The interpolation threshold (black dotted line) is observed at $n \cdot K$. 
}\label{fig:FuCoNN_MNIST_REUSEWEIGHT}\end{figure}

It is common to use neural networks with extremely large number of parameters~\cite{canziani2016analysis}.
But to achieve interpolation for a single output (regression or two class classification) one expects to need at least as many parameters as there are data points.
Moreover, if the prediction problem has more than one output (as in multi-class classification), then the number of parameters needed should be multiplied by the number of outputs. This is indeed the case empirically for neural networks shown in Figure~\ref{fig:FuCoNN_MNIST_REUSEWEIGHT}.
Thus, for instance, data sets as large as ImageNet~\cite{ILSVRC15}, which has ${\sim}10^6$ examples and ${\sim}10^3$ classes, may require networks with ${\sim}10^9$ parameters to achieve interpolation; this is larger than many neural network models for ImageNet~\cite{canziani2016analysis}.
In such cases, the classical regime of the U-shaped risk curve is more appropriate to understand generalization.
For smaller data sets, these large neural networks  would be firmly in the over-parametrized regime, and simply training to obtain zero training risk often results in good test performance~\cite{zhang2016understanding}.

Additional results with neural networks are given in Appendix~\ref{app:FNN}.

\section{Decision trees and ensemble methods}

Does the double descent risk curve manifest with other prediction methods besides neural networks?
We give empirical evidence that the families of functions explored by boosting with decision trees and Random Forests also show similar generalization behavior as neural nets, both before and after the interpolation threshold.

AdaBoost and Random Forests have recently been investigated in the interpolation regime by \cite{wyner2017explaining} for classification.
In particular, they give empirical evidence that, when AdaBoost and Random Forests are used with maximally large (interpolating) decision trees, the flexibility of the fitting methods yield interpolating predictors that are more robust to noise in the training data than the predictors produced by rigid, non-interpolating methods (e.g., AdaBoost or Random Forests with shallow trees).
This in turn is said to yield better generalization.
The averaging of the (near) interpolating trees ensures that the resulting function is substantially smoother than any individual tree, which aligns with an inductive bias that is compatible with many real world problems.

\begin{figure}
  \centering
    \includegraphics[width=0.7\textwidth]{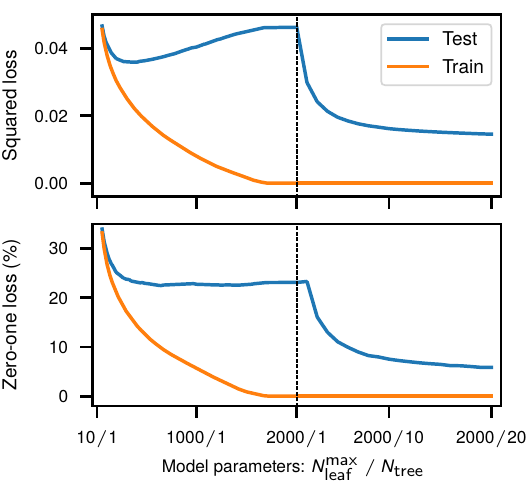}
  \caption{{\bf Double descent risk curve for random forests on MNIST.} The double descent risk curve is observed for random forests with increasing model complexity trained on a subset of MNIST ($n=10^4, 10$ classes). Its complexity is controlled by the number of trees $N_\mathsf{tree}$ and the maximum number of leaves allowed for each tree $N_\mathsf{leaf}^\mathsf{max}$.
  }
  \label{fig:forest-mnist-main}
\end{figure}

We can understand these flexible fitting methods in the context of the double descent risk curve.
Observe that the size of a decision tree (controlled by the number of leaves) is a natural way to parametrize the function class capacity: trees with only two leaves correspond to two-piecewise constant functions with axis-aligned boundary, while trees with $n$ leaves can interpolate $n$ training examples.
It is a classical observation that the U-shaped bias-variance trade-off curve manifests in many problems when the class capacity is considered this way~\cite{friedman2001elements}.
(The interpolation threshold may be reached with fewer than $n$ leaves in many cases, but $n$ is clearly an upper bound.)
To further enlarge the function class, we consider ensembles (averages) of several interpolating trees.\footnote{These trees are trained in the way proposed in Random Forest except without bootstrap re-sampling. This is similar to the PERT method of \cite{cutler2001pert}.}
So, beyond the interpolation threshold, we use the number of such trees to index the class capacity.
When we view the risk curve as a function of class capacity defined in this hybrid fashion, we see the double descent curve appear just as with neural networks; see Figure~\ref{fig:forest-mnist-main} and Appendix~\ref{app:forests}.
We observe a similar phenomenon using $L_2$-boosting~\cite{friedman2001greedy,buhlmann2003boosting}, another popular ensemble method; the results are reported in Appendix~\ref{app:boosting}.

\section{Concluding thoughts}
\label{sec:conclusion}

The double descent risk curve introduced in this paper reconciles the U-shaped curve predicted by the bias-variance trade-off and the observed behavior of rich models used in modern machine learning practice. The posited mechanism that underlies its emergence is based on common inductive biases, and hence can explain its appearance (and, we argue, ubiquity) in machine learning applications.

We conclude with some final remarks.
\paragraph{Historical absence.}
The double descent behavior may have been historically overlooked on account of several cultural and practical barriers.
Observing the double descent curve requires a parametric family of spaces with functions of arbitrary complexity.
The linear settings studied extensively in classical statistics usually assume a small, fixed set of features and hence fixed fitting capacity. 
Richer families of function classes are typically used in the context of non-parametric statistics, where smoothing and regularization are almost always employed~\cite{wasserman2006all}.
Regularization, of all forms, can both prevent interpolation and change the effective capacity of the function class, thus attenuating or masking the interpolation peak.

The RFF model is a  popular and flexible parametric family.
However, these models were originally proposed as computationally favorable alternative to kernel machines.
This computational advantage over traditional kernel methods holds only for $N \ll n$, and hence models at or beyond the interpolation threshold are typically not considered.

The situation with general multilayer neural networks, is slightly different and more involved.
Due to the non-convexity of the ERM optimization problem, solutions in the classical under-parametrized regime are highly sensitive to initialization.
Moreover, as we have seen, the peak at the interpolation threshold is observed within a narrow range of parameters. Sampling of the parameter space that misses that range may lead to the misleading impression that increasing the size of the network simply improves performance.
Finally, in practice, training of neural networks is typically stopped as soon as (an estimate of) the test risk fails to improve. This {\em early stopping} has a strong regularizing effect that, as discussed above, makes it difficult to observe the interpolation peak. 

\paragraph{Inductive bias.}
In this paper, we have dealt with several types of methods for choosing interpolating solutions.
For Random Fourier and Random ReLU features, solutions are constructed explicitly by minimum norm linear regression in the feature space.
As the number of features tends to infinity they approach the minimum functional norm solution in the Reproducing Kernel Hilbert Space, a solution which maximizes functional smoothness subject to the interpolation constraints. 
For neural networks, the inductive bias owes to the specific training procedure used, which is typically SGD.
When all but the final layer of the network are fixed (as in RFF models), SGD initialized at zero also converges to the minimum norm solution.
While the behavior of SGD for more general neural networks is not fully understood, there is significant empirical and some theoretical evidence (e.g.,~\cite{gunasekar2017implicit}) that a similar minimum norm inductive bias is present.
Yet another type of inductive bias related to averaging is used in random forests.
Averaging potentially non-smooth interpolating trees leads to an interpolating solution with a higher degree of smoothness; this averaged solution performs better than any individual interpolating tree.

Remarkably, for kernel machines all three methods lead to the same minimum norm solution. Indeed, the minimum norm interpolating classifier, $\optninf$, can be obtained directly by explicit norm minimization (solving an explicit system of linear equations), through SGD or by averaging trajectories of Gaussian processes (computing the posterior mean~\cite{rasmussen2004gaussian}).  

\paragraph{Optimization and practical considerations.}
In our experiments, appropriately chosen ``modern''  models usually outperform the optimal ``classical'' model on the test set.
But another important practical advantage of over-parametrized models is in  optimization.
There is a growing understanding that larger models are ``easy'' to optimize as local methods, such as SGD, converge to global minima of the training risk in over-parametrized regimes (e.g.,~\cite{soltanolkotabi2018theoretical}).
Thus, large interpolating models can have low test risk and be easy to  optimize at the same time, in particular with SGD~\cite{ma2018power}.
It is likely that the models to the left of the interpolation peak have optimization properties qualitatively different from those to the right, a distinction of significant practical import.

\paragraph{Outlook.}
The classical U-shaped bias-variance  trade-off curve has shaped our view of model selection and directed applications of learning algorithms in practice. The understanding of model performance developed in this work delineates the limits of classical analyses and opens new lines of enquiry
to study and compare computational, statistical, and mathematical properties of the classical and modern regimes in  machine learning. We hope that this perspective, in turn, will help practitioners choose models and algorithms for optimal performance.

\section*{Acknowledgments}
We thank Peter Bickel for editing the PNAS submission, and the anonymous reviewers for their helpful feedback. 
Mikhail Belkin, Siyuan Ma and Soumik Mandal were supported by NSF RI-1815697.
Daniel Hsu was supported by NSF CCF-1740833 and Sloan Research Fellowship. We thank Nvidia for donating GPUs used for this research.

\bibliographystyle{plainnat}

\appendix

\section{Approximation theorem}
\label{app:approx}

Suppose the training data $(x_1,y_1), \dotsc, (x_n,y_n)$ are sampled independently by drawing $x_i$ uniformly from a compact domain in $\R^d$, and assigning the label $y_i = h^*(x_i)$ using a target function $h^* \in \rkhs$.
Let $h \in \rkhs$ be another hypothesis that interpolates the training data $(x_1,y_1),\dotsc,(x_n,y_n)$.
The following theorem bounds the error of $h$ in approximating $h^*$.
\begin{theorem}
  \label{thm:wendland}
  Fix any $h^* \in \rkhs$.
  Let $(x_1,y_1),\dotsc,(x_n,y_n)$ be independent and identically distributed random variables, where $x_i$ is drawn uniformly at random from a compact cube\footnote{Same argument can be used for more general domains and probability distributions.} $\Omega \subset \R^d$, and $y_i = h^*(x_i)$ for all $i$.
  There exists absolute constants $A, B > 0$ such that, for any interpolating $h \in \rkhs$ (i.e., $h(x_i) = y_i$ for all $i$), so that with high probability
  \[
    \sup_{x \in \Omega} |h(x) - h^*(x)| < Ae^{-B(n/\log n)^{1/d}}\left(\|h^*\|_\rkhs + \|h\|_\rkhs \right) .
  \]
\end{theorem}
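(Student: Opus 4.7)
The strategy is to treat the difference $f := h - h^*$ as an element of $\rkhs$ that vanishes on the random design $X_n := \{x_1,\dotsc,x_n\}$ and has RKHS norm at most $\|h\|_\rkhs + \|h^*\|_\rkhs$ by the triangle inequality. The theorem then reduces to the statement that any function in the Gaussian RKHS which vanishes on a ``dense enough'' point set must be uniformly small on $\Omega$, with the required density being quantified by the fill distance
\[
  h_{X_n} := \sup_{x \in \Omega} \min_{1 \le i \le n} \|x - x_i\| .
\]

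The first step is a deterministic sampling inequality specific to the Gaussian kernel: there exist absolute constants $A_1, B_1 > 0$ (depending only on $\Omega$ and the kernel bandwidth) such that for every $f \in \rkhs$ vanishing on $X_n$, provided $h_{X_n}$ is sufficiently small,
\[
  \sup_{x \in \Omega} |f(x)| \;\le\; A_1 \exp\!\bigl(-B_1 / h_{X_n}\bigr)\, \|f\|_\rkhs .
\]
This is a classical result from scattered data approximation, stated in Wendland's monograph; the exponential (rather than polynomial) rate reflects the analyticity of the Gaussian kernel and is typically proved via a bound on the power function using local polynomial reproduction of arbitrarily high order combined with the super-exponential decay of the kernel's Fourier transform.

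The second step is a probabilistic bound on the fill distance of $n$ i.i.d.\ uniform points in the cube $\Omega$: with high probability,
\[
  h_{X_n} \;\le\; C_2 \left(\frac{\log n}{n}\right)^{1/d}.
\]
This follows by a standard covering / union-bound argument: partition $\Omega$ into $O(n/\log n)$ subcubes of side length proportional to $(\log n / n)^{1/d}$, note that each subcube is missed by all samples with probability $(1 - \Theta(\log n / n))^n \le n^{-c}$, and take a union bound.

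Combining the two yields
\[
  \sup_{x \in \Omega} |h(x) - h^*(x)| \;\le\; A_1 \exp\!\Bigl(-(B_1/C_2)\,(n/\log n)^{1/d}\Bigr)\bigl(\|h\|_\rkhs + \|h^*\|_\rkhs\bigr),
\]
giving the theorem with $A = A_1$ and $B = B_1/C_2$. The main obstacle is the first step: the exponential sampling inequality for Gaussian RKHS is delicate, and one must be careful that the constants are genuinely absolute (depending only on $\Omega$ and the kernel) and that the ``$h_{X_n}$ small enough'' hypothesis holds for all sufficiently large $n$ on the high-probability event from the second step. The fill-distance bound and the final substitution are routine.
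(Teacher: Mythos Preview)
Your proposal is correct and follows essentially the same route as the paper's proof sketch: the paper likewise sets $f := h - h^*$, bounds its RKHS norm by the triangle inequality, controls the fill distance $\kappa_n = O((n/\log n)^{-1/d})$ with high probability via a grid partition and union bound, and then invokes Theorem~11.22 of Wendland~\cite{wendland_2004} (precisely the exponential sampling inequality for the Gaussian kernel that you identified) to conclude.
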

\begin{proof}[Proof sketch]
Recall that {\it the fill} $\kappa_n$ of the set of points $x_1,\ldots,x_n$ in $\Omega$ is a measure of how well these points cover $\Omega$: $\kappa_n = \max_{x \in \Omega} \min_{x_j \in \{x_1,\ldots,x_n\}} \|x - x_j\|$. It is easy to verify (e.g., by taking an appropriate grid partition of the cube $\Omega$ and applying the union bound)  
that with high probability $\kappa_n = O(n / \log n)^{-1/d}$. 

Consider now a function $f(x) := h(x) - h^*(x)$. We observe that $f(x_i)=0$ and by the triangle inequality $\|f\|_\rkhs \le \|h^*\|_\rkhs + \|h\|_\rkhs$. Applying Theorem 11.22 in~\cite{wendland_2004} to $f$ yields the result. 
\end{proof}
The minimum norm interpolating function $\optninf$ has norm no larger than that of $h^*$ (by definition) and hence achieves the smallest bound in Theorem~1.
While these bounds apply only in the noiseless setting, they provide a justification for the inductive bias based on choosing a solution with a small norm.
Indeed, there is significant empirical evidence that minimum norm interpolating solutions generalize well on a variety of datasets, even in the presence of large amounts of label noise~\cite{belkin2018understand}.

\section{Experimental setup}

To demonstrate the double descent risk curve, we train a number of representative models including neural networks, kernel machines and ensemble methods on several widely used datasets that involve images, speech, and text.

\paragraph{Datasets.}

\begin{table}[b]
\centering
\caption{Descriptions of datasets. In experiments, we use subsets to reduce the computational cost. }
\label{tbl: dataset}
\begin{tabular}{|c||c|c|c|}
\hline
Dataset & \begin{tabular}[c]{@{}c@{}}Size of\\ full training set\end{tabular} & \begin{tabular}[c]{@{}c@{}}Feature\\ dimension ($d$)\end{tabular} & \begin{tabular}[c]{@{}c@{}}Number of\\ classes\end{tabular} \\ \hline\hline
CIFAR-10 & $5 \cdot 10^4$ & 1024 & 10 \\ \hline
MNIST & $6 \cdot 10^4$ & 784 & 10 \\ \hline
SVHN & $7.3 \cdot 10^4$ & 1024 & 10 \\ \hline
TIMIT & $1.1 \cdot 10^6$ & 440 & 48 \\ \hline
20-Newsgroups & $1.6 \cdot 10^4$ & 100 & 20 \\ \hline
\end{tabular}
\end{table}

Table~\ref{tbl: dataset} describes the datasets we use in our experiments.
These datasets are for classification problems with more than two classes, so we adopt the one-versus-rest strategy that maps a multi-class label to a binary label vector (one-hot encoding).
For the image datasets---namely MNIST~\cite{lecun1998gradient}, CIFAR-10~\cite{krizhevsky2009learning}, and SVHN~\cite{netzer2011reading}---color images are first transformed to grayscale images, and then the maximum range of each feature is scaled to the interval $[0, 1]$.
For the speech dataset TIMIT~\cite{garofolo1993darpa}, we normalize each feature by its z-score. 
For the text dataset 20-Newsgroups~\cite{Lang95}, we transform each sparse feature vector (bag of words) into a dense feature vector by summing up its corresponding word embeddings obtained from~\cite{pennington2014glove}.

For each dataset, we subsample a training set (of size $n$) uniformly at random without replacement.
For the 20-Newsgroups dataset, which does not have a test set provided, we randomly pick $1/8$ of the full dataset for use as a test set.

\paragraph{Model training.}

Each model is trained to minimize the squared loss on the given training set.
Without regularization, such model is able to interpolate the training set when its capacity surpasses certain threshold (interpolation threshold).
For comparison, we report the test/train risk for  zero-one and squared loss. 
In experiments for neural networks and ensemble methods, we repeat the same experiment five times and report the mean for the risks. RFF and Random ReLU experimental results were reported based on a single run as the results were empirically highly consistent.

\begin{figure}
  \centering
    \includegraphics[width=.9\textwidth]{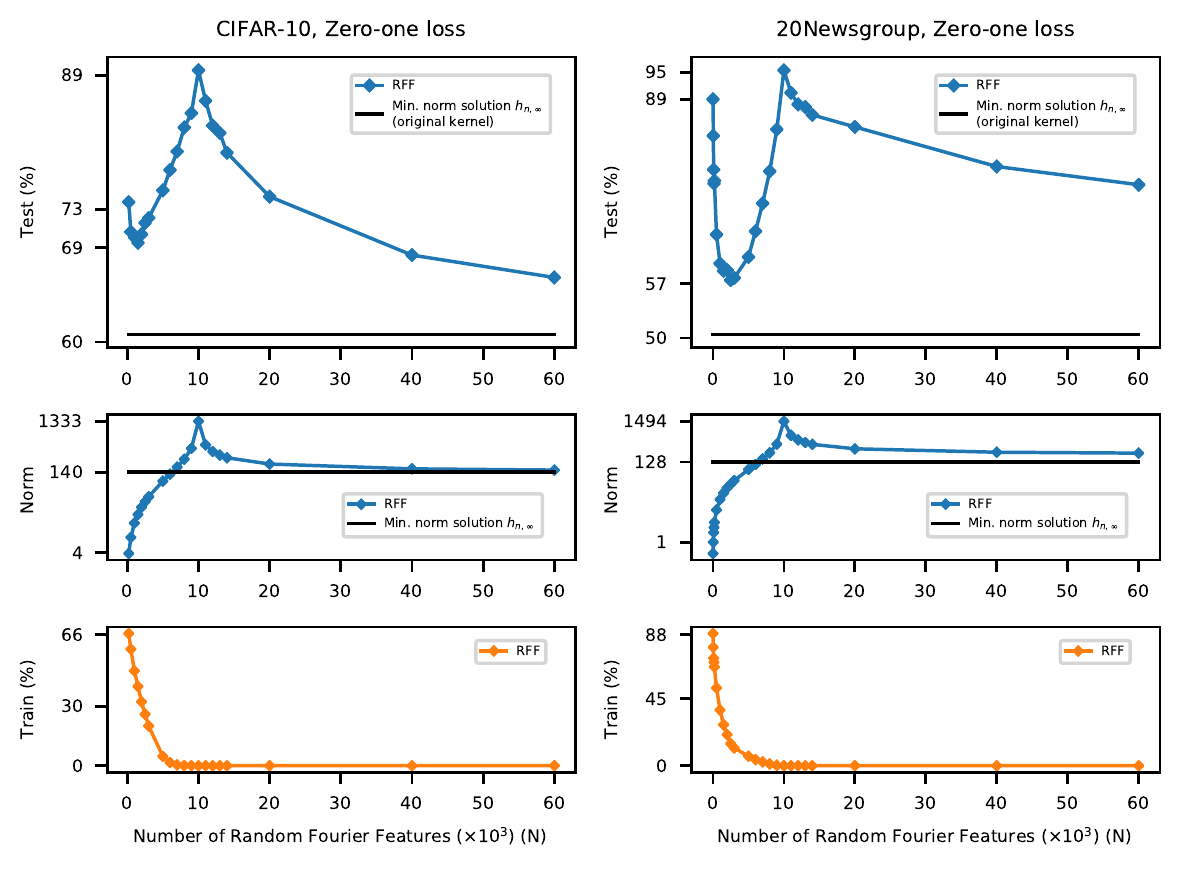}    
  \caption{{\bf Double descent risk curve for RFF model.} Test risks (log scale), coefficient $\ell_2$ norms (log scale), and training risks of the RFF model predictors $\optnN$ learned on subsets of CIFAR-10 and 20Newsgroups ($n=10^4$). The interpolation threshold is achieved at $N=10^4$. }
  \label{fig:RFF_CIFAR-10_20NEWS}
\end{figure}

\begin{figure}
  \centering
    \includegraphics[width=.9\textwidth]{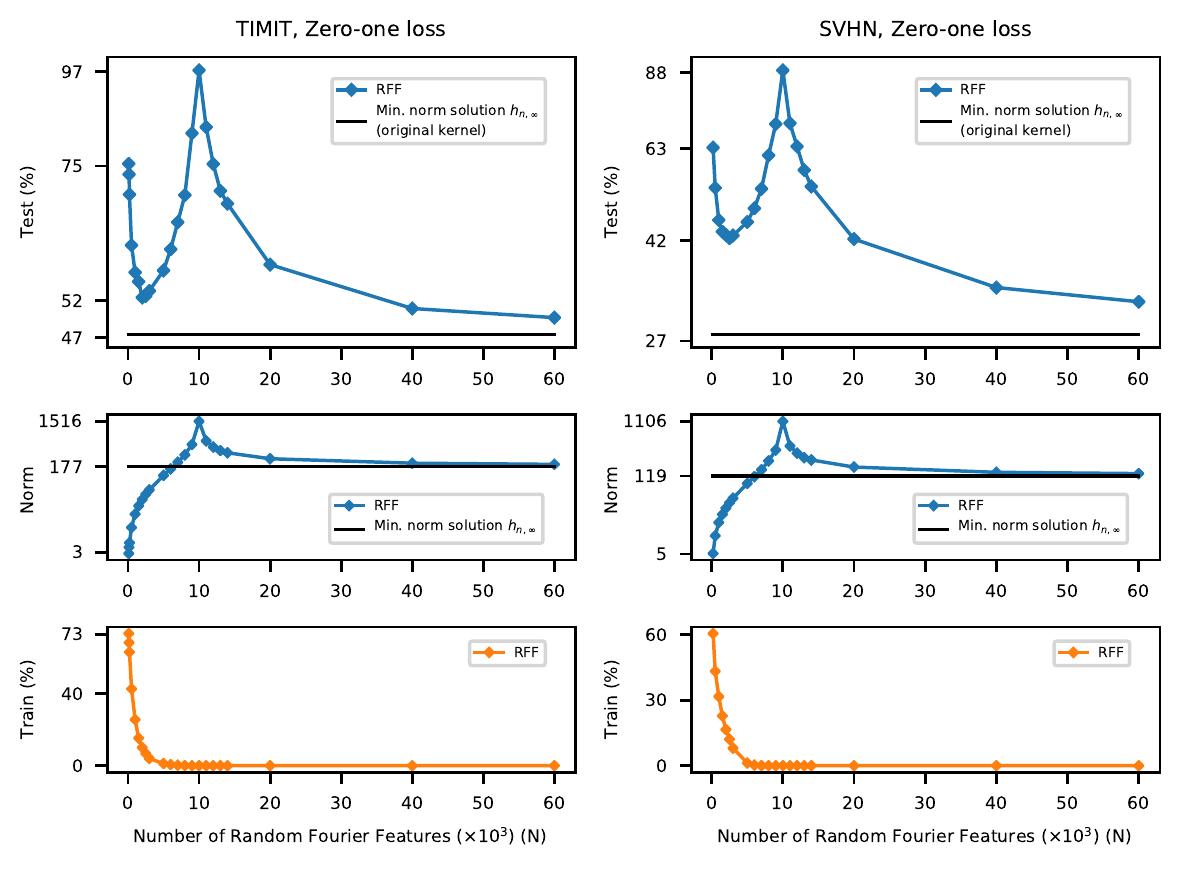}
  \caption{{\bf Double descent risk curve for RFF model.} Test risks (log scale), coefficient $\ell_2$ norms (log scale), and training risks of RFF model predictors $\optnN$ learned on subsets of TIMIT and SVHN ($n=10^4$). The interpolation threshold is achieved at $N=10^4$.}
  \label{fig:RFF_TIMIT_SVHN}
\end{figure}

\section{Additional experimental results for neural networks}
\label{app:RFF+NN}

\subsection{Random Fourier Feature models}\label{app:RFF}

We provide additional experimental results  for several real-world datasets. Figure~\ref{fig:RFF_CIFAR-10_20NEWS} illustrates double descent behavior for CIFAR-10 and 20Newsgroup. Figure~\ref{fig:RFF_TIMIT_SVHN} shows similar curves of zero-one loss for TIMIT and SVHN. 
The random feature vectors $v_1,\dotsc,v_N$ are sampled independently from $\mathcal{N}(0,\sigma^{-2} \cdot I)$, the mean-zero normal distribution in $\R^d$ with covariance $\sigma^{-2} \cdot I$.
The bandwidth parameter $\sigma$ is set to 5, 5, 5, 0.1, and 16 for MNIST, SVHN, CIFAR-10, 20-Newsgroup, and TIMIT, respectively.

\subsection{Random ReLU Feature models}\label{app:ReLU}

\begin{figure}
  \centering
    \includegraphics[width=.9\textwidth]{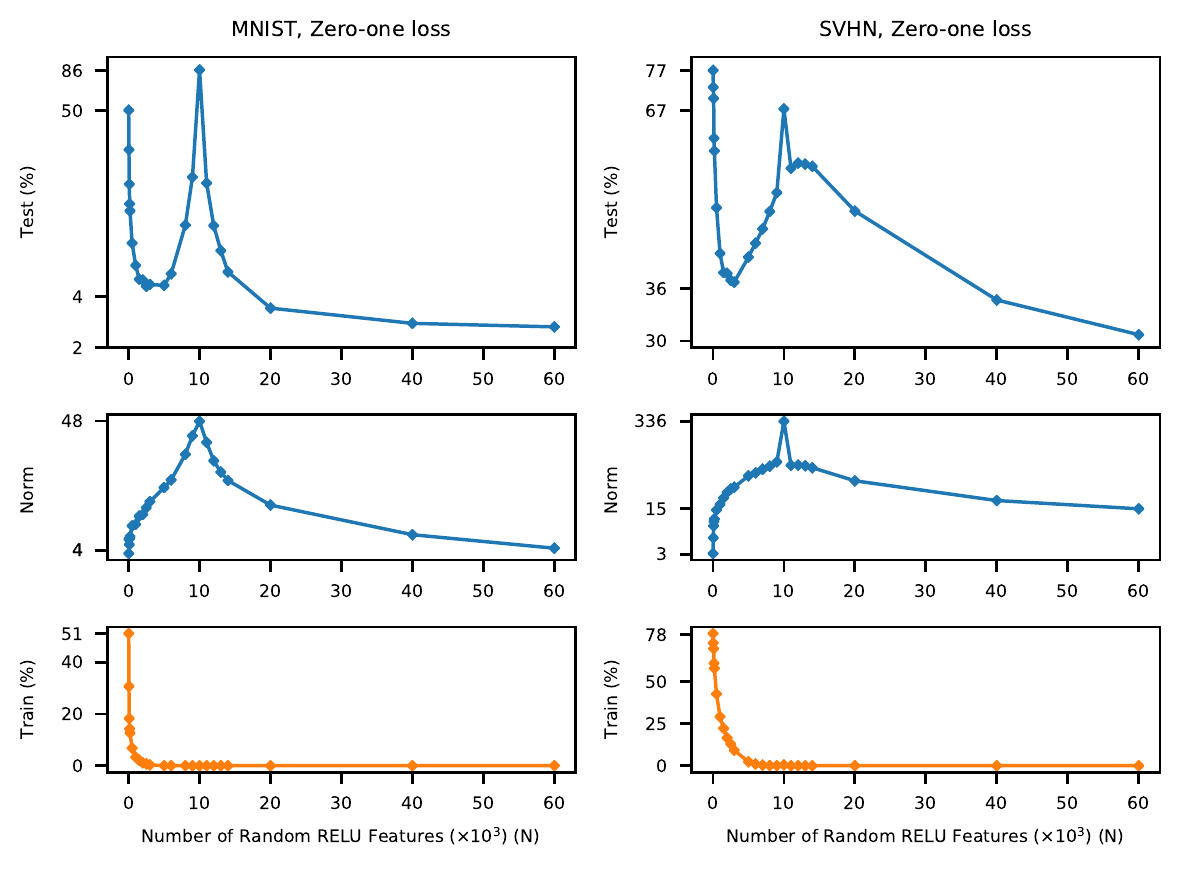}
  \caption{{\bf Double descent risk curve for Random ReLU model.} Test risks (log scale), coefficient $\ell_2$ norms (log scale), and training risks of the Random ReLU Features model predictors $\optnN$ learned on subsets of MNIST and SVHN data ($n=10^4$). The interpolation threshold is achieved at $N=10^4$. Regularization of $4\cdot10^{-6}$ is added for SVHN to ensure numerical stability near interpolation threshold.}
  \label{fig:RELU_MNIST_SVHN}
\end{figure}

We show that the double descent risk curve also appears with Random ReLU feature networks~\cite{cho2009ReLU}.
Such networks are similar to the RFF models, except that they use the ReLU transfer function.
Specifically, the Random ReLU features model family $\cHN$ with $N$ parameters consists of functions $h \colon \R^d \to \R$ of the form
\[
  h(x) = \sum_{k=1}^N a_k \phi(x; v_k)
  \quad\text{where}\quad
  \phi(x; v) :=\max( \langle v,x \rangle,0) .
\]
The vectors $v_1,\dotsc,v_N$ are sampled independently from uniform distribution over surface of unit sphere in $\R^d$.
The coefficients $a_k$ are learned using linear regression.
Figure~\ref{fig:RELU_MNIST_SVHN} illustrates zero-one loss with Random ReLU features for MNIST and SVHN data.
Ridge regularization with parameter $\lambda := 4\cdot 10^{-6}$ is added in SVHN experiments to ensure numerical stability near the interpolation threshold.
For MNIST experiments, no regularization is added.
We observe that the resulting risk curves and the norm curves are very similar to those for RFF.

\subsection{Fully connected neural networks}\label{app:FNN}

In our experiments, we use fully connected neural networks with a single hidden layer.
To control the capacity of function class, we vary the number of hidden units.
We use stochastic gradient descent (SGD) to solve the ERM optimization problem in this setting.

The ERM optimization problem in this setting is generally more difficult than that for RFF and ReLU feature models due to a lack of analytical solutions and non-convexity of the problem.
Consequently, SGD is known to be sensitive to initialization.
To mitigate this sensitivity, we use a ``weight reuse'' scheme with SGD in the under-parametrized regime ($N < n$), where the parameters obtained from training a smaller neural network are used as initialization for training larger networks.
This procedure, detailed below, ensures decreasing training risk as the number of parameters increases. 
In the over-parametrized regime ($N \geq n$), we use standard (random) initialization, as typically there is no difficulty in obtaining near-zero training risk. 

Additional experimental results for neural networks are shown in Figure~\ref{fig:fcnn-s}. Results for MNIST and CIFAR-10 with weight reuse are reported in Figure~\ref{fig:fcnn-s}(a) and Figure~\ref{fig:fcnn-s}(b). Results for MNIST without weight reuse are reported in Figure~\ref{fig:fcnn-s}(c). In this setting, all models are randomly initialized. While the variance is significantly larger, and the training loss is not monotonically decreasing, the double descent behavior is still clearly discernible.

We now provide specific details below. We use SGD with standard momentum (parameter value $0.95$) implemented in~\cite{chollet2015keras} for training.
In the weight reuse scheme, we assume that we have already trained a smaller network with $H_1$ hidden units. To train a larger network with $H_2>H_1$ hidden units, we initialize the first $H_1$ hidden units of the larger network to the weights learned in the smaller network. The remaining weights are initialized with normally distributed random numbers (mean $0$ and variance $0.01$). The smallest network is initialized using standard Glorot-uniform distribution~\cite{glorot2010understanding}. For networks smaller than the interpolation threshold, we decay the step size by $10\%$ after each of $500$ epochs, where an epoch denotes a pass through the training data.
For these networks, training is stopped after classification error reached zero or $6000$ epochs, whichever happens earlier. For networks larger than interpolation threshold, fixed step size is used, and training is stopped after $6000$ epochs.

\begin{figure}
  \begin{tabular}{ccc}
  \includegraphics[width=0.31\textwidth]{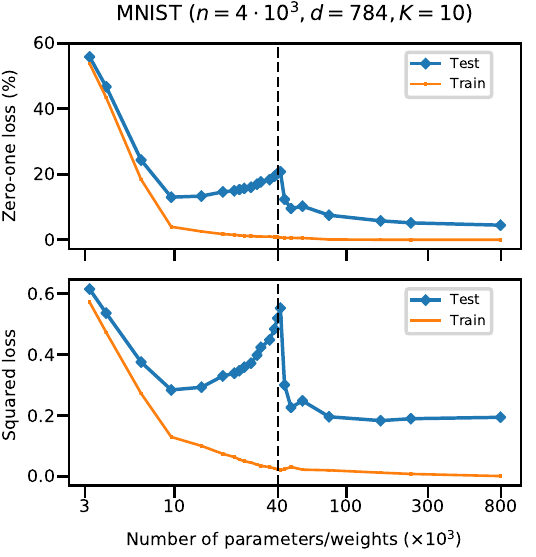} &
  \includegraphics[width=0.31\textwidth]{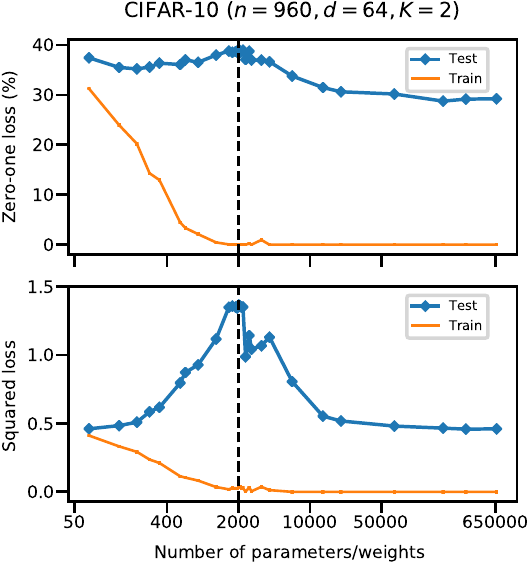} &
  \includegraphics[width=0.31\textwidth]{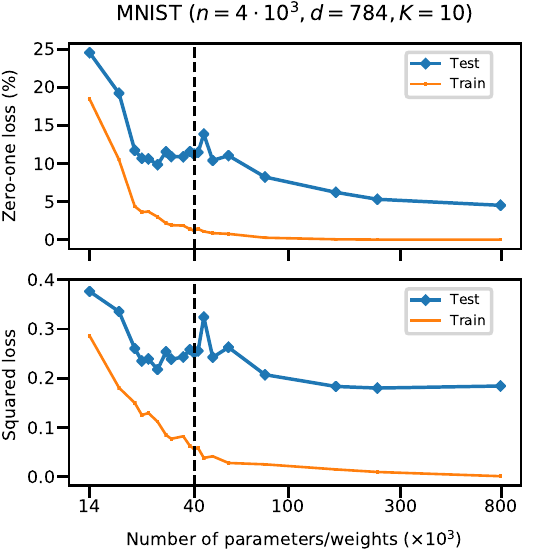} \\
  {\bf (a)} & {\bf (b)} & {\bf (c)}
  \end{tabular}
  \caption{{\bf Double descent risk curve for fully connected neural networks.}  
  In each plot, we use a dataset with $n$ subsamples of $d$ dimension and $K$ classes for training.
  We use networks with a single hidden layer. 
  For network with $H$ hidden units, its number of parameters is $(d+1) \cdot H + (H+1) \cdot K$. 
  The interpolation threshold is observed at $n \cdot K$ and is marked by black dotted line in figures.
  ({\bf a}) Weight reuse before interpolation threshold and random initialization after it on MNIST.
  ({\bf b}) Same, on a subset of CIFAR-10 with 2 classes (cat, dog) and downsampled image features ($8 \times 8$).
  ({\bf c}) No weight reuse (random initialization for all ranges of parameters).}
  \label{fig:fcnn-s}
\end{figure}

\subsection{Synthetic model} \label{app:RFF1D}

We now discuss the nature of the double descent risk curve in the context of a simple synthetic model, which can be viewed as a version of RFF for  functions on the one-dimensional circle.
Consider the class $\cH$ of periodic complex-valued
functions on the interval $[0,2\pi]$, and let \[ e_k(x) := \exp(\sqrt{-1}(k{-}1)x) \] for positive integers $k$.
Fix a probability distribution $p=(p_1,p_2,\ldots)$ on the positive integers.
For each integer $N$, we generate a random function class $\cHN$ by (i) sampling independently from $p$ until $N$ distinct indices $k_1,\dotsc,k_N$ are chosen, and then (ii) let $\cHN$ be the linear span of $e_{k_1},\dotsc,e_{k_N}$.
Here, $N$ is the number of parameters to specify a function in $\cHN$ and also reflects the capacity of $\cHN$.

We generate data from the following model:
\[ y_i = h^*(x_i) + \varepsilon_i \]
where the target function $h^* = \sum_k \alpha_k^* e_k$ is in the span of the $e_k$, and $\varepsilon_1,\dotsc,\varepsilon_n$ are independent zero-mean normal random variables with variance $\sigma^2$.
The $x_1,\dotsc,x_n$ themselves are drawn uniformly at random from $\{2\pi j/M : j = 0,\dotsc,M-1\}$ for $M := 4096$.
We also let $\alpha_k^* := p_k$ for all $k$, with $p_k \propto 1/k^2$.
The signal-to-noise ratio (SNR) is $\E[ h^*(x_i)^2 ] / \sigma^2$.

Given data $(x_1,y_1),\dotsc,(x_n,y_n) \in [0,2\pi] \times \R$, we learn a function from the function class $\cHN$ using empirical risk minimization, which is equivalent to ordinary least squares over an $N$-dimensional space.
Interpolation is achieved when $N \geq n$, so in this regime, we choose the interpolating function $h = \sum_{j=1}^N \alpha_{k_j} e_{k_j}$ of smallest (squared) norm $\|h\|_{\cH}^2 = \sum_k \alpha_k^2/p_k$.

Our simulations were carried out for a variety of sample sizes ($n \in \{2^6,2^7,\dotsc,2^{11}\}$) and are all repeated independently $20$ times; our plots show averages over the $20$ trials.
The results confirm our hypothesized double descent risk curve, as shown in Figure~\ref{fig:synthetic} for $n=256$; the results are similar for other $n$.
The peak occurs at $N = n$, and the right endpoint of the curve is lower than the bottom of the U curve.
The norm of the learned function also peaks at $N = n$ and decreases for $N > n$.

\begin{figure}
  \centering
  \begin{tabular}{cc}
    \includegraphics[width=.45\textwidth]{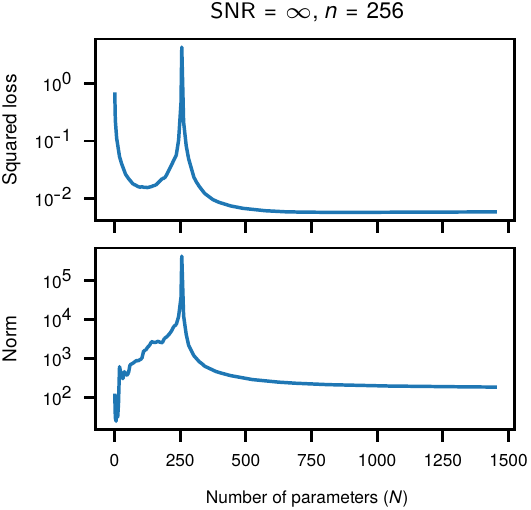} &
    \includegraphics[width=.45\textwidth]{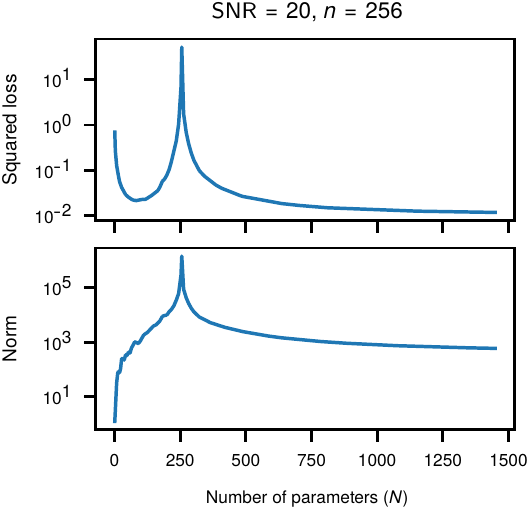}
  \end{tabular}
  \caption{{\bf Results from the synthetic model at SNR = $\infty$ and SNR = $20$.}
  Top: excess test risk under squared loss of learned function.
  Bottom: norm of learned function $\|h\|_{\rkhs}$. For $n = 256$ training samples, the interpolation threshold is reached at $N = 256$.
  }
  \label{fig:synthetic}
\end{figure}

\section{Additional results with Random Forests}
\label{app:forests}

\begin{figure}
  \centering
  \begin{tabular}[t]{ccc}
    \includegraphics[width=0.31\textwidth]{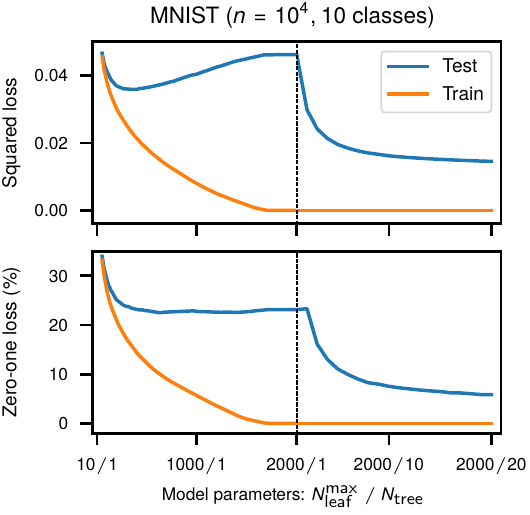} &
    \includegraphics[width=0.31\textwidth]{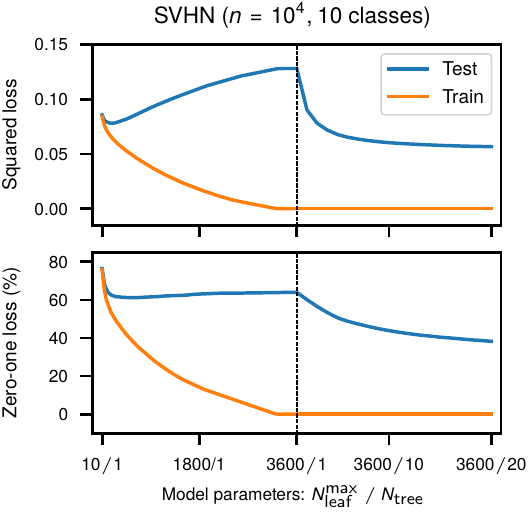} &
    \includegraphics[width=0.31\textwidth]{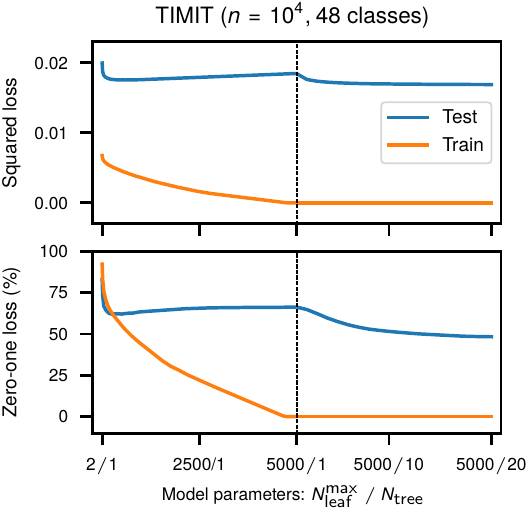}
    \\
    \multicolumn{3}{c}{\bf (a)} \\
    \includegraphics[width=0.31\textwidth]{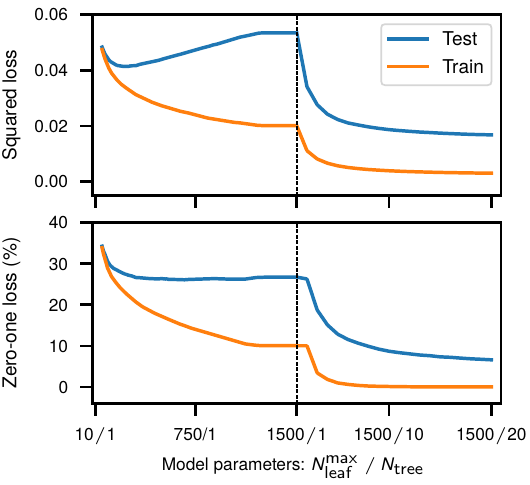} &
    \includegraphics[width=0.31\textwidth]{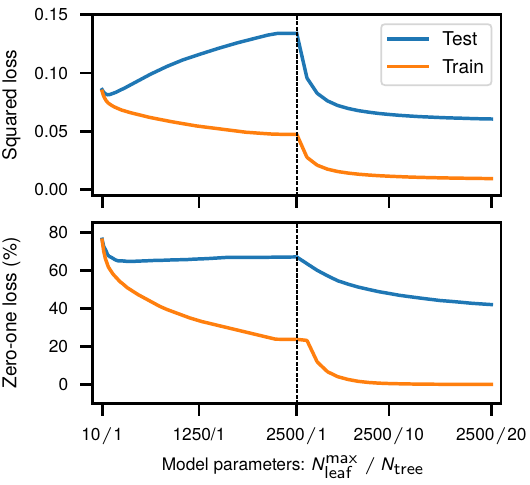} &
    \includegraphics[width=0.31\textwidth]{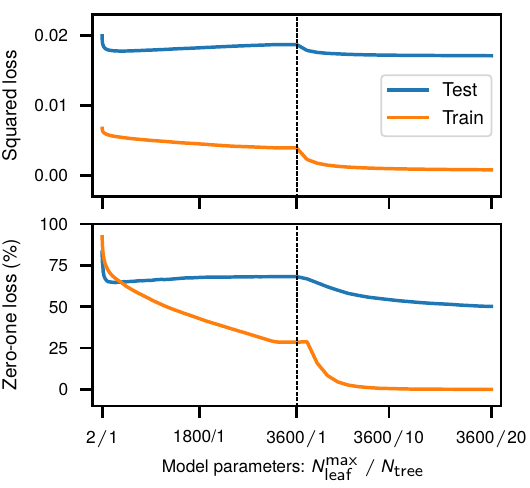}
    \\
    \multicolumn{3}{c}{\bf (b)}
  \end{tabular}
\caption{{\bf Double descent risk curve for random forests.}
In all plots, the double descent risk curve is observed for random forest with increasing model complexity on regression tasks. Its complexity is controlled by the number of trees $N_\mathsf{tree}$ and the maximum number of leaves allowed for each tree $N_\mathsf{leaf}^\mathsf{max}$.
({\bf a}) Without bootstrap re-sampling, a single tree can interpolate the training data.
({\bf b}) With bootstrap re-sampling, multiple trees are needed to interpolate the data.}\label{fig:random-forest-s}
\end{figure}

We train standard random forests introduced by Breiman~\cite{breiman2001random} for regression problems.
When splitting a node, we randomly select a subset of features whose number is the square root of the number of the total features, a setting which is widely used in mainstream implementations of random forest. We control the capacity of the model class by choosing the number of trees ($N_\mathrm{tree}$) and limiting the maximum number of leaves in each tree ($N_\mathrm{leaf}^\mathrm{max}$). We put minimum constraints on the growth of each tree: there is no limit for the tree depth and we split each tree node whenever it is possible.

To interpolate the training data, we disable the bootstrap re-sampling for results in Figure~\ref{fig:random-forest-s}(a), which has been investigated under the name ``Perfect random tree ensembles'' by Cutler et al.~\cite{cutler2001pert}. We see clear double decent risk curve (with both squared loss and zero-one loss) as we increase the capacity of the model class (although the U-shaped curve is less apparent with zero-one loss). In Figure~\ref{fig:random-forest-s}(b), we run the same experiments with bootstrap re-sampling enabled, which show similar double decent risk curves.

\section{Results with $L_2$-boosting}
\label{app:boosting}

\begin{figure}
\centering
  \begin{tabular}{cc}
    \includegraphics[width=0.45\textwidth]{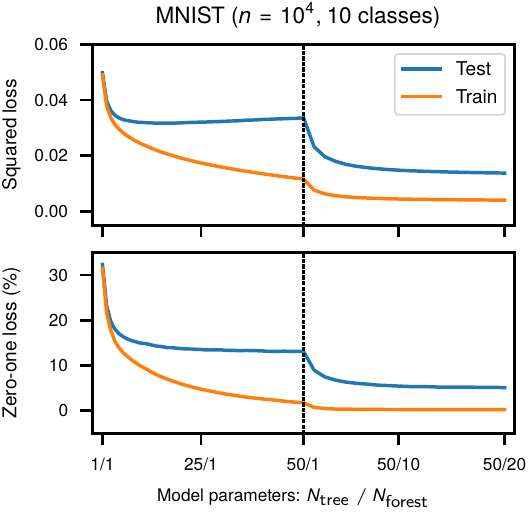} &
    \includegraphics[width=0.45\textwidth]{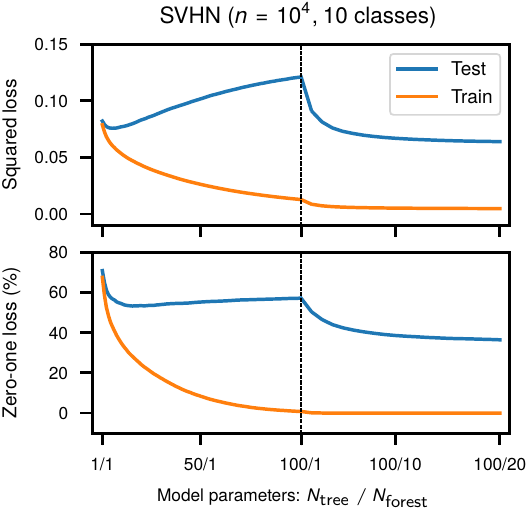}
  \end{tabular}
  \caption[]{{\bf Double descent risk curve for $L_2$-boosting trees.}
In both plots, we increase the model complexity by first increasing the number of boosting (random) trees ($N_\mathsf{tree}$) which form a forest, then averaging several such forests ($N_\mathsf{forest}$). Each tree is constrained to have no more than 10 leaves. For fast interpolation, the gradient boosting is applied with low shrinkage parameter ($0.85$).
}
  \label{fig:l2-boosting}
\end{figure}

We now show double descent risk curve for $L_2$-boosting (random) trees introduced by Friedman~\cite{friedman2001greedy}.
When splitting a node in a tree, we randomly select a subset of features whose number is the square root of the number of the total features.
We constrain each tree to have a small number of leaves (no more than 10).
As the number of trees increases, the boosted trees gradually interpolate the training data and form a forest.
To quickly reach interpolation, we adopt low shrinkage (parameter value $0.85$) for gradient boosting.
To go beyond the interpolation threshold, we average the predictions of several such forests which are randomly constructed and trained with exactly same hyper-parameters. The capacity of our model is hence controlled by the number of forests ($N_\mathrm{forest}$) and the number of trees ($N_\mathrm{tree}$) in each forest.

Figure~\ref{fig:l2-boosting} shows the change of train and test risk as the model capacity increases.
We see the double descent risk curve for both squared loss and zero-one loss.
We also observe strong over-fitting under squared loss before the interpolation threshold.
For similar experiments with high shrinkage (parameter value $0.1$), the double descent risk curve becomes less apparent due to the regularization effect of high shrinkage~\cite{friedman2001elements}.

\end{document}